\newtheorem{definition}{Definition}
\newtheorem{lemma}{Lemma}
\newtheorem{remark}{Remark}
\newtheorem{theorem}{Theorem}
\title{\LARGE \bf
Robust Convolution Neural ODEs  via Contractivity-promoting regularization}
\author{Muhammad Zakwan, Liang Xu, Giancarlo Ferrari Trecate%
\thanks{This work was supported as a part of NCCR Automation, a National Centre of Competence in Research, funded by the Swiss National Science Foundation (grant number 51NF40\_225155), the NECON project (grant number 200021-219431), and the National Natural Science Foundation of China under grant 62373239.}
\thanks{M. Zakwan is with Control \& Automation Group, Inspire AG, 8005 Zürich, Switzerland \& with Automatic Control Laboratory (IfA), ETH Zürich, 8092 Zürich, Switzerland.}%
\thanks{L. Xu is with the School of Future Technology, Shanghai University, Shanghai, China, and is also with the Hangzhou International Innovation Institute, Beihang University, Hangzhou, China.}%
\thanks{G. Ferrari-Trecate is with the Laboratoire d’Automatique, EPFL, 1015 Lausanne, Switzerland.
    }%
\thanks{Corresponding author:  M. Zakwan, \texttt{\footnotesize muhammad.zakwan@inspire.ch}}
}
\begin{document}

\maketitle
\thispagestyle{empty}
\pagestyle{empty}

\begin{abstract}
       Neural networks can be fragile to input noise and adversarial attacks.
  In this work, we consider Convolutional Neural Ordinary Differential Equations (NODEs) – a family of continuous-depth neural networks represented by dynamical systems - and propose to use contraction theory to improve their robustness.
  For a contractive dynamical system two trajectories starting from different initial conditions converge to each other exponentially fast.
  Contractive Convolutional NODEs can enjoy increased robustness as slight perturbations of the features do not cause a significant change in the output.
  Contractivity can be induced during training by using a regularization term involving the Jacobian of the system dynamics.
  To reduce the computational burden, we show that it can also be promoted using carefully selected weight regularization terms for a class of NODEs with slope-restricted activation functions.
  The performance of the proposed regularizers is illustrated through benchmark image classification tasks on MNIST and FashionMNIST datasets, where images are corrupted by different kinds of noise and attacks.
\end{abstract}

\section{INTRODUCTION}
\label{sec:intro}

Neural networks (NNs) have demonstrated outstanding performance in image classification, natural language processing, and speech recognition tasks.
However, they can be sensitive to input noise or meticulously crafted adversarial attacks~\cite{RN11030,RN11033}.
The customary remedies are either heuristic, such as feature obfuscation~\cite{RN11002}, adversarial training~\cite{RN11056,allen2022feature}, and defensive distillation \cite{papernot2016distillation}, or certificate-based such as Lipschitz regularization \cite{RN11030,pauli2021training,RN11751}.
The overall intent of certificate-based approaches is to penalize the input-to-output sensitivity of NNs to improve robustness.

Recently, the connections between NNs and dynamical systems have been extensively explored.
Representative results include classes of NNs stemming from the discretization of dynamical systems~\cite{RN10726} and  NODEs~\cite{RN10739},  which transform the input through a continuous-time ODE embedding training parameters. 
The continuous-time nature of NODEs makes them particularly suitable for learning complex dynamical systems~\cite{RN11780, RN10758} and allows borrowing tools from dynamical system theory to analyze their properties~\cite{fazlyabSafetyVerificationRobustness2022, RN11564}. 

In this paper, we employ contraction theory to improve the robustness of Convolutional NODEs. A dynamical system is contractive if all trajectories converge exponentially fast to each other~\cite{RN11742, RN11712}.
Through the lens of contraction, slight perturbations of initial conditions have a diminishing impact over time on the NODE state.
With the above considerations, we propose a class of regularizers that promote contractivity of NODEs during the training. 
In the most general case, the regularizers require the Jacobian matrix of the NODE, which might be computationally challenging to obtain for deep networks. 
Nevertheless, for a wide class of NODEs with slope-restricted activation functions, we show that contractivity can be promoted by directly penalizing the weights during the training.
Moreover, by leveraging the linearity of convolution operations, we demonstrate that contractivity can be promoted for convolutional NODEs by regularizing the convolution filters only. 

{\bf Related Work:}
Several works have focused on improving the robustness of general NNs against input noise and adversarial attacks using dynamical system theory.
For example, the notion of incremental dissipativity is used to provide robustness certificates for NNs in the form of a linear matrix inequality~\cite{RN11751}.
The work~\cite{chen2021towards} addresses the robustness issue of NNs by using a closed-loop control method from the perspective of dynamical systems.
A control process is added to a trained NN to generate control signals to mitigate the perturbations in input data. Nevertheless, the method requires to solve an optimal control problem for the inference of an input sample, which increases the computational burden.  

A detailed study on the robustness of NODEs has been done by \cite{hanshu2019robustness}, where the authors show that NODEs can be more robust against random perturbations than common convolutional NNs. 
Moreover, they study time-invariant NODEs, and propose to regularize their flows to further enhance the robustness.
To bolster the defense against adversarial attacks, NODEs equipped with Lyapunov-stable equilibrium points have been proposed~\cite{RN11716}. 
Likewise, \cite{rodriguez2022lyanet} introduced a loss function to promote robustness based on a control-theoretic Lyapunov condition.     
Both methods have shown promising performance against adversarial attacks.
Finally, \cite{massaroliStableNeuralFlows2020} design provably stable NODEs and argue that stability can reduce the sensitivity to small perturbations of the input data.
Nevertheless, this claim is not supported by theoretical analysis or numerical validation.
In comparison to all the aforementioned works, in this paper, we employ contraction theory to regularize the trajectories of NODEs and improve robustness.

Recently, contraction theory has been used in the framework of NNs for various purposes. For instance, contractivity is exploited to improve the well-posedness and robustness of implicit NNs \cite{RN11770}, the trainability of recurrent NNs~\cite{RN11159, jafarpour2022robustness}, and the analysis of Hopfield NNs with Hebbian learning~\cite{centorrino2022contraction}. 
In \cite{zakwan2022Robust}, the authors propose a Hamiltonian NODE that is contractive by design to improve robustness.
However, the extension to different classes of NODEs, including convolutional NODEs, is not straightforward.
Besides the robustification of NNs and NODEs, contractivity has also been exploited for learning NN-based dynamical models from data and control. For instance,  \cite{singh2021learning} and \cite{revay2021recurrent2} utilize contraction theory to learn stabilizable nonlinear NN models from available data, and \cite{zakwan2024neural_exp} leverages control contraction metrics for exponential stabilization of control-affine nonlinear systems.

{\bf Contributions:}
The contribution of this paper is fourfold.
\begin{itemize}
\item We show that contractivity can be used to improve the robustness of NODEs, and demonstrate how to promote contractivity for general NODEs during training by including regularization terms in the cost function.
\item The regularization terms involve optimizing the Jacobian matrix in NODEs, which might be computationally expensive.
Interestingly, for a wide class of NODEs with slope-restricted activation functions, we prove that contractivity can be promoted by carefully penalizing weight matrices and without optimizing the Jacobian matrix.
\item By exploiting the linearity of convolution operations and the above results for NODEs with slope-restricted activation functions, we show that contractivity for convolutional NODEs can be induced by suitably penalizing the convolutional filters.
\item We conduct experiments on MNIST and FashionMNIST datasets with test images perturbed by different kinds of noise and adversarial attacks. 
Compared to vanilla NODEs, by using contractivity-promoting regularization terms the average test accuracy can be improved up to 34\% in the presence of input noise and up to 30\% in the case of adversarial attacks.
\end{itemize}

{\bf Organization and Notation}
The paper is organized as follows:
Section~\ref{sec:problem_formulation} provides preliminaries on NODEs and contraction theory.
In Section~\ref{sec.CNODE}, we propose several regularization approaches for NODEs to promote contractivity.
Numerical experiments are described in Section~\ref{sec.Experiments}, and Section~\ref{sec.Conclusion} concludes the paper.

The set of  real numbers is $\mathbb{R}$.
 $\frac{\partial f({x})}{\partial {x}}$ represents the Jacobian matrix of a continuously differentiable function $f(\cdot)$. 
 The  minimal eigenvalue of a symmetric matrix ${A}$ is denoted as ${\lambda}_{\mathrm{min}}({A})$.
 $\text{diag}({x})$ represents a diagonal matrix with the entries of the vector ${x}$ on the diagonal.
For symmetric matrices ${A}$ and ${B}$, ${A} \succ (\succeq) {B}$ means that ${A}-{B}$ is positive (semi)definite.
$I$ denotes the identity matrix.
The 2-norm is denoted as $||\cdot||$. The shorthand notion for the nonlinear activation function ReLU$(x)$ is $\{x\}_+$.
\section{PROBLEM FORMULATION}
\label{sec:problem_formulation}

\subsection{Neural Ordinary Differential Equation}
A NODE is represented by the dynamical system
\begin{align}
  \label{eq.NODE}
  \dot{x}_t=f(x_t, \theta_t, t), \quad t\in [0, T]\;,
\end{align}
where $x_t\in \mathbb{R}^n$ is the state of the NODE and $f(x_t, \theta_t,t)$ is a generic smooth function with parameters $\theta_t\in \mathbb{R}^m$.
When used in machine learning tasks, the NODE is usually pre- and post- pended with additional layers, e.g., $x_0=h_{\alpha}(z)$ and $y=g_{\beta}(x_T)$, where $h_{\alpha}, g_{\beta}$ are NNs with parameters $\alpha\in \mathbb{R}^{n_\alpha}, \beta\in \mathbb{R}^{n_\beta}$, respectively, $z\in \mathbb{R}^{n_z}$ is the input feature, $y\in \mathbb{R}^{p}$ represents the output, and $x_0$, $x_T$ are the state of the NODE~(\ref{eq.NODE}) at time $t=0$, and $t=T$, respectively.

Several methods have been proposed for training NODEs, such as the adjoint sensitivity method \cite{RN10739}, and the auto-differentiation technique \cite{paszke2017automatic}.
In this paper, we use the most straightforward approach, that is, the time-discretization of (\ref{eq.NODE}) \cite{RN10726}.   

Consider a classification task, and suppose the training dataset is $\{z_i, c_i\}_{i=1}^s$, where $z_i$ are the input features (e.g. images), $c_i$ are the corresponding labels, and $s$ is the number of training samples.
Before training, the NODE~(\ref{eq.NODE}) is discretized  and the
resulting discrete-time equations define each of the  network layers. For instance, by using Forward Euler (FE) method one obtains\footnote{For simplicity, we assume that $\frac{T}{h}$ is an integer.}
\begin{align}
  \label{eq:8}
  x_{k+1}=x_k+h {f}(x_k,\theta_k,k), \quad k=0, \ldots, \frac{T}{h}-1\;,
\end{align}
where $h > 0$ is the sampling period.
Then, the NODE is trained  by solving the optimization problem
\begin{align}
  & \min_{\alpha, \{\theta_k\}_{k=0}^{T/h-1}, \beta}\quad \sum_{i=1}^s l(y_i,c_i) + \gamma \text{reg}({\alpha, \{\theta_k\}_{k=0}^{T/h-1}, \beta})  \label{eq:costFunction} \\
 & \begin{aligned}
  \mathrm{s.t.} \quad    &x_{0}^i= h_\alpha(z_i),\quad i=1, \ldots, s \;,\\
  &x_{k+1}^i =x_k^i+h f(x_k^i,\theta_k, k), \quad k=0, \ldots, \frac{T}{h}-1, \nonumber \\ 
  &y_i =g_{\beta}(x_{T/h}^i) \;,    
  \end{aligned} 
\end{align}
where $l(\cdot, \cdot)$ denotes the loss function, and $\text{reg}(\cdot)$ is a regularization term suitably scaled by the regularization parameter $\gamma>0$.
For brevity, throughout the paper, we omit the pre- and post-pended  layers $h_\alpha(\cdot)$ and $g_\beta(\cdot)$, which usually depend on the specific learning task~\cite{RN10739}.
\subsection{Contractivity}
Contractivity is a property of dynamical systems, and it implies that the trajectories of the dynamical system converge to each other asymptotically.
The formal definition is given below.
\begin{definition}\label{def.ConDef}
The dynamics~(\ref{eq.NODE}) is contractive with a contraction rate $\rho>0$ if
\begin{align}
  \|\hat{x}_t-{x}_t \| \le e^{-\rho t} \|\hat{x}_0-{x}_0\|,\quad  \forall t\in [0, T]\;, \label{eq.ContractionDefinition}
\end{align}
for all $x_0, \hat{x}_0\in \mathbb{R}^n$, where $\hat{x}_t$ and ${x}_t$ are the solutions of (\ref{eq.NODE}) with initial conditions $\hat{x}_0$ and ${x}_0$, respectively.\footnote{Note that our definition of contractivity sets \(C = 1\), differing slightly from the commonly found definition: \(\|\hat{x}_t - x_t \| \leq C e^{-\rho t} \|\hat{x}_0 - x_0\|\) with \(C, \rho > 0\) for all \(t \in [0, T]\). See \cite{FB-CTDS} for details.
}
\end{definition}

Therefore, if a NODE is contractive, the Lipschitz constant between the input and the output is smaller than $1$, that is, $\frac{\|\hat{x}_T-{x}_T \|}{\| \hat{x}_0-{x}_0\|}< 1$ for any $\hat{x}_0, {x}_0$.
As a result, contractive NODEs are robust in the sense that a slight perturbation in the input features $x_0$ would not result in a large deviation in the output $x_T$.
Moreover, we have that the NODE~(\ref{eq.NODE}) is contractive with a contraction rate $\rho$, if and only if~\cite{RN11712} 
\begin{align}
  \label{eq.ODEContractivityCriteria}
  \Gamma(x):=-\rho I-\left(\frac{\partial f}{\partial x}+\frac{\partial f}{\partial x}^{\top}\right) \succ 0,\ \forall t\in[0, T], \ x\in \mathbb{R}^n\;,
\end{align}
where $\frac{\partial f}{\partial x}$ is the Jacobian matrix of $f$.

\begin{remark}
The notion of asymptotic stability used in~\cite{massaroliStableNeuralFlows2020} might not be appropriate for promoting robustness of NNs. Indeed, as shown in \cite{RN11792}, although for convergent dynamics the perturbed states eventually converge  to a unique trajectory, after a finite time, the distance between trajectories can be arbitrarily large, which can result in poor robustness of the NODE (\ref{eq.NODE}).
In contrast, contractive dynamics does not suffer from this problem, and we will show in Section~\ref{sec.Experiments} that contractivity can considerably improve the robustness of NODEs.
\end{remark}

\begin{remark}
Contractivity implies all the state trajectories of~(\ref{eq.NODE}) converge exponentially fast to an equilibrium~\cite{RN11712}, which may limit the representation power of NODEs. However, a loss of expressivity might be unavoidable for increasing robustness, as discussed in~\cite{tsipras2018robustness}.
\end{remark}

When training NODEs with global contractivity requirement, the training time $T$ is finite, and we can also tune the contraction rate $\rho$, which is a hyper-parameter. As a result, the NODE trajectory would neither diverge nor converge to the same point during training, ensuring good learning and robustness performance. The readers can also refer to Figure 1 in~\cite{zakwan2022Robust} as an illustration showing that global contraction can still ensure a good learning result.

\section{Contractivity-Promoting Regularization}\label{sec.CNODE}
To promote the robustness of the NODE~\eqref{eq.NODE}, one can leverage a regularization term penalizing the violation of~(\ref{eq.ODEContractivityCriteria}).
Contractivity requires the inequality~(\ref{eq.ODEContractivityCriteria}) to hold for all $t\in[0, T], x\in \mathbb{R}^{n}$.
However, during the training, we only have access to discretized states $x_k^i$ and hence, we can promote the fulfillment of the condition~(\ref{eq.ODEContractivityCriteria}) by using the following regularization term in~\eqref{eq:costFunction}
\begin{align}
  \label{eq.eigReg}
\text{reg}(\{\theta_k\}_{k=0}^{T/h-1})=  \sum_{i=1}^s\sum_{k=0}^{T/h} \left\{-\lambda_{\mathrm{min}}\left(\Gamma(x)|_{x_k^i, k}\right)\right\}_+,
\end{align}
where $\mathrm{ReLU}(\cdot)$ denotes the ReLU activation function.

  Although the regularizer~(\ref{eq.eigReg}) stems from~(\ref{eq.ODEContractivityCriteria}), there are some differences.
 The condition~(\ref{eq.ODEContractivityCriteria}) implies that all the trajectories converge to each other exponentially fast.
  In contrast, the regularizer~(\ref{eq.eigReg}) only penalizes the violation of contractivity locally on the sampled state $x_k^i$, which is weaker than~(\ref{eq.ODEContractivityCriteria}) and therefore imposes fewer constraints on NODEs.
  Due to the smoothness property of NODEs,  one can show that the learned trajectories $\{x_0^i, x_1^i, \ldots, x_{{T}/{h}}^i \}$ are locally contractive in the sense that the relation \eqref{eq.ContractionDefinition} holds only in the neighborhood of $x_k, k=0, \ldots, T/h$.

\subsection{Weight Regularization for Improving Training Complexity\label{sec.WeightReg}}
Since the regularization term~(\ref{eq.eigReg}) involves the Jacobian matrices $\frac{\partial f}{\partial x}|_{x_k^i, k}$ for all $i, k$, it might be computationally expensive to obtain.
In this section, we focus on a family of NODEs with slope-restricted activation functions and show that one can directly regularize their trainable parameters to promote contractivity.
Consider the following NODE
\begin{align}
  \label{eq.SimplifiedNODE}
  \dot{x}_t=\sigma(W_tx_t+b_t), \quad t\in [0,T] \;,
\end{align}
where $x_t\in \mathbb{R}^n$ is the state, $W_t\in \mathbb{R}^{n\times n}, b_t\in \mathbb{R}^{n}$ are NN parameters, and $\sigma(\cdot)$ is the activation function.
The following theorem provides a sufficient condition on the weights $W_t$ guaranteeing that~(\ref{eq.SimplifiedNODE}) is contractive.
\begin{theorem}\label{thm.Sufficiency}
Assume  $\sigma'(\cdot) \in [\underline{\kappa}, \bar{\kappa}]$, where $\sigma'(\cdot)$ denotes any sub-derivative of $\sigma$, and $\bar{\kappa}> \underline{\kappa} >0$. Moreover, for $\rho>0$, let the following condition hold
\begin{align}
  \label{eq.weightReg}
  -\rho -2 \underline{\kappa} W_{t, ii}  - \bar{\kappa}  \sum_{j=1, j\neq i}^n \left(|W_{t, ij}|+|W_{t, ji}|\right) > 0 \;,
\end{align}
for $i=1, \ldots, n$, and $t \in [0,T]$, where $W_{t, ij}$ is the $ij$-th element of $W_t$. Then, the NODE~(\ref{eq.SimplifiedNODE}) is contractive with a contraction rate $\rho$.
\end{theorem}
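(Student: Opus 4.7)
The plan is to verify the contractivity criterion~\eqref{eq.ODEContractivityCriteria} pointwise in $(x,t)$ and then conclude by Gershgorin's disc theorem. Applying the chain rule to the dynamics~\eqref{eq.SimplifiedNODE} gives $\partial f/\partial x = D(x,t)\,W_t$, where $D(x,t) = \text{diag}\bigl(\sigma'(W_t x + b_t)\bigr)$ is diagonal with entries $d_i \in [\underline{\kappa},\bar{\kappa}]$. Writing $\Gamma(x) = -\rho I - (DW_t + W_t^\top D)$ out componentwise yields
\begin{align*}
\Gamma_{ii} &= -\rho - 2 d_i W_{t,ii}, \\
\Gamma_{ij} &= -d_i W_{t,ij} - d_j W_{t,ji}\quad (i\neq j),
\end{align*}
and the goal is to show $\Gamma(x) \succ 0$ under hypothesis~\eqref{eq.weightReg}.

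Next I would extract sign information from the hypothesis. Because $\underline{\kappa} > 0$ and the off-diagonal contribution $\bar{\kappa}\sum_{j\neq i}(|W_{t,ij}|+|W_{t,ji}|)$ is non-negative, \eqref{eq.weightReg} forces $-\rho - 2\underline{\kappa} W_{t,ii} > 0$, hence $W_{t,ii} < 0$. With this sign fixed, the diagonal can be lower-bounded using the worst-case slope $\underline{\kappa}$: since $W_{t,ii}<0$ and $d_i \geq \underline{\kappa}$,
\[
\Gamma_{ii} \;=\; -\rho + 2 d_i |W_{t,ii}| \;\geq\; -\rho + 2\underline{\kappa}|W_{t,ii}| \;=\; -\rho - 2\underline{\kappa} W_{t,ii}.
\]
For the off-diagonals, the triangle inequality together with $d_i,d_j \leq \bar{\kappa}$ gives $|\Gamma_{ij}| \leq \bar{\kappa}\bigl(|W_{t,ij}| + |W_{t,ji}|\bigr)$.

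Combining these bounds produces the Gershgorin row estimate
\[
\Gamma_{ii} - \sum_{j\neq i} |\Gamma_{ij}| \;\geq\; -\rho - 2\underline{\kappa} W_{t,ii} - \bar{\kappa}\sum_{j\neq i} \bigl(|W_{t,ij}| + |W_{t,ji}|\bigr) \;>\; 0,
\]
by~\eqref{eq.weightReg}. Hence $\Gamma(x)$ is symmetric and strictly diagonally dominant with positive diagonal entries, so Gershgorin's theorem yields $\Gamma(x) \succ 0$. Since this holds uniformly in $(x,t)$, the criterion~\eqref{eq.ODEContractivityCriteria} is satisfied and~\eqref{eq.SimplifiedNODE} is contractive with rate~$\rho$.

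The main obstacle is the bookkeeping on constants: \eqref{eq.weightReg} mixes $\underline{\kappa}$ on the diagonal with $\bar{\kappa}$ on the off-diagonals, and recovering a uniform sign-independent bound requires first establishing $W_{t,ii}<0$ before lower-bounding the diagonal of $\Gamma$ by $\underline{\kappa}$; without that observation, the worst-case slopes $d_i$ would give the wrong inequality direction. A secondary subtlety is non-smooth activations such as leaky-ReLU: since $\underline{\kappa} > 0$ is required, the standard ReLU is excluded, and for the admissible activations $\sigma'$ is read as a sub-derivative, so the Jacobian exists almost everywhere and the pointwise argument goes through in the Clarke generalized-Jacobian sense.
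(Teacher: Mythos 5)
Your proof is correct and follows essentially the same route as the paper's: apply the contractivity criterion~\eqref{eq.ODEContractivityCriteria} to get $-\rho I - DW_t - W_t^\top D$ with $D$ diagonal, invoke the Gershgorin disc theorem, deduce $W_{t,ii}<0$ so that the diagonal can be lower-bounded via $\underline{\kappa}$ while the off-diagonals are upper-bounded via $\bar{\kappa}$, and conclude from~\eqref{eq.weightReg}. If anything, your derivation of $W_{t,ii}<0$ directly from the hypothesis (by dropping the nonnegative off-diagonal sum) is stated slightly more cleanly than in the paper, but it is the same argument.
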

\begin{proof}
  From~\eqref{eq.ODEContractivityCriteria}, the NODE~(\ref{eq.SimplifiedNODE}) is contractive with a contraction rate $\rho$ if
\begin{align}
  \label{eq.SimpleNODEContractivityRequirement}
  -\rho I- J_tW_t -W_t^{\top}J_t\succ 0, \quad \forall x\in \mathbb{R}^n, t\in [0, T] \;,
\end{align} 
where $J_t$ is the Jacobian matrix of $\sigma(W_tx_t+b_t)$ with respect to the input $W_tx_t+b_t$.
It follows that $J_t$ is a diagonal matrix with the $i$-th diagonal entry equal to $\sigma'([W_tx_t+b_t]_{i})$, where $[W_tx_t+b_t]_i$ denotes the $i$-th element of $W_tx_t+b_t$.
According to the Gersgorin disk theorem~\cite{hornMatrixAnalysis1ed1985}, any matrix $S\in \mathbb{R}^{n\times n}$ that satisfies the following conditions
\begin{gather*}
  S_{ii}> \sum_{j=1, j\neq i}^n |S_{ij}|, \quad  i=1, \ldots, n
\end{gather*}
is positive definite (i.e. $S\succ 0$).
The diagonal elements of the matrix $-\rho I -J W-W^{\top}J$ (where the subscript $t$ is dropped for simplicity) are
$
  -\rho -2J_{ii} W_{ii},
$
where $J_{ii}, W_{ii}$ are the $ii$-th elements of the matrices $J$ and $W$, respectively.
Moreover, the $ij$-th ($i \neq j$) elements of the matrix  $-\rho I -J W-W^{\top}J$ are  
\begin{align*}
-  J_{ii}W_{ij}-J_{jj}W_{ji}\;.
\end{align*}
Therefore, in view of Gersgorin disk theorem, the matrix $-\rho I -J W-W^{\top}J$ is positive definite if
\begin{align}
  \label{eq.1}
  -\rho -2 J_{ii}W_{ii}> \sum_{j=1, j\neq i}^n |J_{ii}W_{ij}+J_{jj}W_{ji}|, \quad i=1, \ldots, n \;.
\end{align}
A sufficient condition for the feasibility of (\ref{eq.1}) is that a lower bound of the left hand side (LHS) is greater than an upper bound of the right hand side (RHS).
Consequently, the diagonal of the weight matrix $W$ has to be negative, i.e. $W_{ii}\le 0$.
Since $\bar{\kappa} \ge \sigma'(\cdot)\ge \underline{\kappa}$, a lower bound of the LHS of (\ref{eq.1}) is
\begin{align*}
-\rho -2 \underline{\kappa} W_{ii} \;,
\end{align*}
and an upper bound of the RHS of (\ref{eq.1}) is
\begin{align*}
\bar{\kappa} \left( \sum_{j=1, j\neq i}^n |W_{ij}|+|W_{ji}| \right).
\end{align*}
Hence, if the condition~(\ref{eq.weightReg}) holds for all $i$ and $t\in[0, T]$, 
the inequality~(\ref{eq.SimpleNODEContractivityRequirement}) is verified, and the NODE~(\ref{eq.SimplifiedNODE}) is contractive with the contraction rate $\rho$.
\end{proof}

Inspired by the above result, we can use the following regularization term in~(\ref{eq:costFunction}) during the training to promote contractivity of the NODE~(\ref{eq.SimplifiedNODE})
\begin{align}
    \label{eq.suff}
&\text{reg}\left(\{W_k\}_{k=0}^{T/h-1} \right)=\sum_{k=0}^{T/h-1} \sum_{i=1}^n \{\phi(W_k,\rho,\underline{\kappa},\bar{\kappa}) \}_+\;,
\end{align}
where $\phi(W_k,\rho,\underline{\kappa},\bar{\kappa}) = \rho +2 (\underline{\kappa} 
+ \bar{\kappa})W_{k,ii} + \bar{\kappa}  \sum_{j=1}^n \left(|W_{k,ij}|+|W_{k, ji}|\right) $
and $W_k$ is the discretized counterpart of $W_t$ during the training.
\begin{remark}
  Similar to the Hamiltonian NODEs in~\cite{zakwan2022Robust} ensuring contractivity by design, in view of Theorem~\ref{thm.Sufficiency}, one can parameterize a subset of the weight matrices of  the NODE~(\ref{eq.SimplifiedNODE}) that satisfy the condition~\eqref{eq.weightReg} by design.
  The main idea is to modify the diagonal elements of  $W_t$ such that the resulting weight matrices $\tilde{W}_t$ satisfy~\eqref{eq.weightReg} automatically.
These matrices can be written as
  \begin{align*}
   \tilde{W}_t= W_t+H_t \;,
  \end{align*}
  where $H_t=\mathrm{diag}(H_{t,1}, \ldots, H_{t,n})$ with $2\underline{\kappa}H_{t,i}= -\rho -2 \underline{\kappa} W_{t, ii}  - \bar{\kappa}  \sum_{j=1, j\neq i}^n (|W_{t,ij}|+|W_{t,ji}|)-\tau$ for any $\tau>0$.
\end{remark}

\subsection{Efficient Implementation of Regularizers for Convolutional Layers}
NNs are widely employed to perform image classification tasks, and convolutional layers have proved to be effective for image processing.
However, convolution operations on inputs $x_t$ are usually not given in the form of $W_tx_t+b_t$ appearing in~\eqref{eq.SimplifiedNODE}, which hampers the direct use of the regularizers described in Section~\ref{sec.WeightReg}.
Although the convolution operation can be represented as $W_tx_t+b_t$ due to its linearity property, it might be burdensome to obtain $W_t$.
Hence, we propose a new regularizer directly defined on the convolution filters to avoid computing $W_t$.


By construction, the input $x_0$ and the output $x_T$ of the NODE~(\ref{eq.NODE}) have the same size.
Therefore, we consider convolution operations~\cite{goodfellowDeepLearning2016} that preserve the dimension of the input.
Suppose the size of the input $X$ and the output $Y$ of the convolution operation is $D\times P\times H$, where $P$ and $H$ are the width and height, respectively, of the image, and $D$ is the number of channels.
Let $X_i$ be the $i$-th channel of the input $X$, and $Y_j$ be the $j$-th channel of the output $Y$.
Both channels have size $P\times H$.
Furthermore, let the filters of the convolution operations be $C_{i}^{j}$, $i, j=1, \ldots, D$, where $C_i^j$ represents the filter map from the $i$-th input channel to the $j$-th output channel.
Since inputs and outputs of NODEs have the same size, the convolution operations must satisfy additional conditions.
For example, if the filter $C_i^j$ is of size $3\times 3$, the input size can be preserved by adding a zero-padding of $1$ to the input and by applying a stride of $1$~\cite{cicconeNaisnetStableDeep2018}.
The output for the $j$-th channel $Y_j$ can be written as
\begin{align}
  \label{eq.ConvolutionOperation}
  Y_j=\sum_{i=1}^D C_i^j*X_i,\quad \forall j \in \{1, \ldots, D\} \;
,\end{align}
where $*$ denotes the convolution operator.
Let $\mathrm{Vec}(X)$ be the column vector concatenating the transpose of all the rows of $X_i$  for all $i$.
Then, (\ref{eq.ConvolutionOperation}) can be written as
$  \mathrm{Vec}(Y)=W\times \mathrm{Vec}(X) \;,$
for some weight matrix $W\in \mathbb{R}^{n\times n}$, where $n=D\times P\times H$. 
From~(\ref{eq.ConvolutionOperation}), we can see that every element of $W$ is a linear function of $C_i^j$.
However, computing $W$ from $C_i^j$ can be time-consuming.
The following lemma reveals important connections between the matrix $W$ and the filters $C_i^j$, that can be leveraged to directly regularize the filters $C_i^j$ for imposing contractivity.

\begin{lemma}[\cite{cicconeNaisnetStableDeep2018}]\label{lem.ConvMapLem}
  Suppose the size of $C_{i}^j$ is $3\times 3$, and the convolution operation is applied with a zero-padding of $1$ and a stride of $1$. Then the following results hold.
  
(1) Let $\{ C_{d}^{d} \}_{\mathrm{c}}$ denote the center element of $C_d^d$, $d=1, \ldots, D$. Then
\begin{align*}
   W_{ii}=\{ C_{d}^{d} \}_{{\mathrm{c}}},  
\end{align*}
for  $i= P\times H\times (d-1)+1,\ldots, P\times H\times d\;,$

(2) Let $\{C_j^d\}_{kl}$ denote the $kl$-th elements of $C_j^d$. Then,
  \begin{align*}
    \sum_{j=1}^n |W_{ij}| \le \sum_{j=1}^{D} \sum_{k,l}|\{C_{j}^{d}\}_{kl}| \;,
    \end{align*}
for $i = P\times H\times (d-1)+1,\ldots, P\times H\times d\;,$ and
\begin{align*}
      \sum_{j=1}^n |W_{ji}| \le \sum_{j=1}^{D} \sum_{k,l}|\{C_{d}^{j}\}_{kl}| \;,
    \end{align*}
      for  $i = P\times H\times (d-1)+1, \ldots, P\times H\times d \;.$
\end{lemma}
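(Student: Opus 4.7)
The plan is to derive an explicit formula for each entry of $W$ in terms of the filter coefficients and then read off the three claims directly. I would first fix a vectorization convention: for channel index $d\in\{1,\ldots,D\}$ and pixel $(p,h)\in\{1,\ldots,P\}\times\{1,\ldots,H\}$, set $r(p,h,d)=(d-1)PH+(p-1)H+h$, so that indices $PH(d-1)+1,\ldots,PHd$ of $\mathrm{Vec}(X)$ (and of $\mathrm{Vec}(Y)$) encode the $d$-th channel. Using a $3\times 3$ filter with stride one and zero-padding one, the convolution \eqref{eq.ConvolutionOperation} expands to
\begin{equation*}
Y_d[p,h]=\sum_{i=1}^D\sum_{k=1}^3\sum_{l=1}^3 \{C_i^d\}_{kl}\,\bar X_i[p+k-2,\,h+l-2],
\end{equation*}
where $\bar X_i$ denotes $X_i$ extended by zeros outside the image domain. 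Matching coefficients identifies the entry of $W$ at row $r(p,h,d)$ (output) and column $r(p',h',i)$ (input) as $\{C_i^d\}_{p'-p+2,\,h'-h+2}$ whenever both $(p'-p+2,h'-h+2)\in\{1,2,3\}^2$ and $(p',h')$ lies inside the image, and as $0$ otherwise.

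For part (1), I would take $i=d$ and $(p',h')=(p,h)$ in the formula above; the filter offset becomes $(2,2)$, hence $W_{r(p,h,d),\,r(p,h,d)}=\{C_d^d\}_{2,2}=\{C_d^d\}_{\mathrm c}$ for every $(p,h)$. This covers exactly the range $i\in\{PH(d-1)+1,\ldots,PHd\}$ claimed in the lemma.

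For part (2), fix a row index $i=r(p,h,d)$ and group the nonzero entries in that row by the input channel $i'$: those entries are precisely $\{C_{i'}^d\}_{kl}$ for $(k,l)$ ranging over the subset $\mathcal V(p,h)\subseteq\{1,2,3\}^2$ of offsets whose shifted pixel lies inside the image. The triangle inequality together with $\mathcal V(p,h)\subseteq\{1,2,3\}^2$ yields the row bound, with strict inequality only at boundary pixels where zero-padding discards some filter positions. The column bound follows by a symmetric computation: fixing the column index at $r(p,h,d)$ and letting the row index vary over $r(p',h',d')$, the explicit formula gives entries $\{C_d^{d'}\}_{p-p'+2,\,h-h'+2}$, so the sum of their absolute values is bounded by $\sum_{d'=1}^{D}\sum_{k,l}|\{C_d^{d'}\}_{kl}|$, matching the second displayed inequality of the lemma.

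The main obstacle is purely bookkeeping: one must carefully track that the row sum aggregates over filters $C_{i'}^d$ (all input channels mapping into the fixed output channel $d$), whereas the column sum aggregates over filters $C_d^{d'}$ (the fixed input channel $d$ mapping into all output channels), so the superscripts and subscripts on $C$ swap roles between the two bounds. Once the explicit entry formula for $W$ is in hand, this swap is immediate, and the inequality (as opposed to equality) is entirely due to zero-padding at the image boundary.
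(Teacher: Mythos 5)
Your proof is correct. Note that the paper itself gives no proof of this lemma --- it is imported verbatim from the cited NAIS-Net reference \cite{cicconeNaisnetStableDeep2018} --- so there is no in-paper argument to compare against; your derivation matches the standard one in that reference, namely writing out the doubly block Toeplitz-like structure of $W$ entrywise and bounding row and column sums. The one step you handle only implicitly, but which is the crux of why the bounds hold with each filter coefficient counted exactly once, is that for stride $1$ the offset map $(p',h')\mapsto(p'-p+2,\,h'-h+2)$ is injective, so within a fixed row (resp.\ column) of $W$ and a fixed input (resp.\ output) channel, each coefficient $\{C_{i'}^d\}_{kl}$ (resp.\ $\{C_d^{d'}\}_{kl}$) appears at most once; this is what turns the triangle inequality into the stated channel-summed bounds rather than a bound with multiplicities. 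Your observations that the center offset $(2,2)$ always lands inside the image (so the diagonal identity in part (1) is an equality even at boundary pixels, with no padding caveat) and that the inequalities in part (2) are strict only where zero-padding truncates the receptive field are both accurate, and the bookkeeping swap between $C_j^d$ (row sums, fixed output channel) and $C_d^j$ (column sums, fixed input channel) is handled correctly.
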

\begin{remark}
  Although Lemma \ref{lem.ConvMapLem} only considers convolution operations with a filter size $3\times 3$, a zero-padding of $1$ and a stride of $1$, the result can also be extended to other convolution operations that preserve the size of the input, for example, the convolution operation with a filter size $5\times 5$, a zero-padding of $2$ and a stride of $1$. For more details, please refer to~\cite{cicconeNaisnetStableDeep2018}.
\end{remark}

In view of Lemma \ref{lem.ConvMapLem}, for $i= P\times H\times (d-1)+1, \ldots, P\times H\times d$, we have
\begin{align}
    &\sum_{j=1, j\neq i}^n \left(|W_{ij}|+|W_{ji}|\right)
  =\sum_{j=1}^n \left(|W_{ij}|+|W_{ji}|\right)-2|W_{ii}| \nonumber \\
 & \hspace{10mm}\le \sum_{j=1}^{D} \left( \sum_{k,l}|\{C_{j}^{d}\}_{kl}|+{\sum_{k,l}|\{C_{d}^{j}\}_{kl}|} \right) -2|\{C_d^d\}_{\text{c}}| \;. \label{eq.ConvUpperBound}
\end{align}

Therefore, if the NN in~(\ref{eq.SimplifiedNODE}) contains a convolutional layer with filters $C_i^j$, one can use the following regularization term

\begin{align}  
  &\text{reg}\left( \{C_i^j\}_{i,j=1}^{D} \right)=
  \sum_{d=1}^{D}P\times H\times \{\psi(C_{i}^j, \rho,\underline{\kappa},\bar{\kappa}) \}_+, \label{eq.ConvRegularizer}
\end{align}
where $\psi(C_{i}^j, \rho,\underline{\kappa},\bar{\kappa}) =\rho +2 (\underline{\kappa}+\bar{\kappa}) \{ C_{d}^{d} \}_{\mathrm{c}}  
  +  \bar{\kappa} \sum_{j=1}^{D} \left( \sum_{k,l}|\{C_{j}^{d}\}_{kl}|+\sum_{k,l}|\{C_{d}^{j}\}_{kl}| \right) $
which is based on the expression of $W_{ii}$ in Lemma~\ref{lem.ConvMapLem}, the upper bound~\eqref{eq.ConvUpperBound}, the contractivity requirement~\eqref{eq.weightReg}, and the constraint $W_{ii}<0$.
\begin{remark}
  The regularizer~\eqref{eq.ConvRegularizer} includes the coefficient $P\times H$, which usually is very large for image classification tasks.
  In experiments of Section~\ref{sec.Experiments}, we omit the term $P\times H$ in~\eqref{eq.ConvRegularizer}, and embed it into the regularization parameter $\gamma$.
\end{remark}



\section{Experiments}\label{sec.Experiments}
In this section, we evaluate the performance of the proposed regularization schemes on image classification tasks for the MNIST and FashionMNIST datasets, which are based on images of size $28\times 28$.
In both cases, we use the NODE~(\ref{eq.SimplifiedNODE}) with convolutional layers.
We train both a vanilla NODE (i.e., using $\gamma=0$ in~\eqref{eq:costFunction}) and the NODE with the regularization term~\eqref{eq.ConvRegularizer}, which we refer to as contractive NODE (CNODE), for ten different seeds so obtaining 10 versions of each model.
All technical details regarding the architecture and experiment settings are provided in the Appendix \ref{app:exp_details}. 

We selected the contraction rate $\rho$ and the regularization parameter $\gamma$ appearing in~\eqref{eq.ConvRegularizer} using the grid search. 
We show in Appendix~\ref{sec.App2} and Appendix~\ref{sec.App3} that the average test accuracy is quite insensitive to the choice of $\rho$ and $\gamma$.
Moreover, we change the convolution parameters in the NODE and repeat the experiment.
The results are shown in Appendix~\ref{sec.App4}, which implies that with different convolution parameters, we can still achieve improved robustness performance with contractivity regularization.

We test the performance of the vanilla NODE and CNODE against noisy test datasets, where the images are perturbed by zero mean Gaussian noise, and salt\&pepper noise \cite{schott2019towards}.
For each kind of noise, we generate several noisy test datasets with different noise strengths.
Moreover, we test the adversarial robustness of the NODEs with respect to fast-gradient-sign-method (FGSM)~\cite{RN11056} and projected gradient descent (PGD) attacks \cite{madry2017towards}, which are standard tests in machine learning for evaluating the robustness of NNs. 

Table~\ref{table.mergedAcc} summarizes the mean and standard deviations of the classification accuracy over all test sets,
where $\sigma$ is the standard deviation of the Gaussian noise, $\epsilon$ denotes the proportion of image pixels corrupted by the impulse noise, and $\delta$ represents the $l_\infty$ amplitude of perturbations in FGSM and PGD attacks. 
The best performance in each column appears in \textbf{bold}.

\begin{table*}[t]
\centering
\scriptsize
\setlength{\tabcolsep}{3pt}
\renewcommand{\arraystretch}{1.1}
\caption{Classification accuracy over noisy and adversarial test images (mean $\pm$ standard deviation).}
\begin{tabular}{c|c|ccc|ccc|ccc|ccc}
    \toprule
    & No Noise & \multicolumn{3}{c|}{Gaussian} & \multicolumn{3}{c|}{Salt\&Pepper} & \multicolumn{3}{c|}{FGSM} & \multicolumn{3}{c}{PGD} \\
    \hline
    {\bf MNIST} & $\sigma=\epsilon=0$ & $\sigma=0.1$ & $\sigma=0.2$ & $\sigma=0.3$ & $\epsilon=0.1$ & $\epsilon=0.2$ & $\epsilon=0.3$ & $\delta=0.01$ & $\delta=0.02$ & $\delta=0.03$ & $\delta=0.01$ & $\delta=0.02$ & $\delta=0.03$ \\
    \hline
    Vanilla NODE & 98$\pm$0.3 & 65$\pm$23 & 45$\pm$21 & 37$\pm$16 & 76$\pm$9 & 54$\pm$11 & 42$\pm$8 & 92$\pm$2 & 67$\pm$8 & 42$\pm$9 & 91$\pm$3 & 63$\pm$12 & 36$\pm$11 \\
    CNODE & 98$\pm$0.1 & {\bf 94$\pm$4} & {\bf 79$\pm$8} & {\bf 62$\pm$12} & {\bf 88$\pm$4} & {\bf 68$\pm$8} & {\bf 48$\pm$8} & {\bf 95$\pm$0.4} & {\bf 86$\pm$2} & {\bf 68$\pm$4} & {\bf 95$\pm$0.4} & {\bf 86$\pm$2} & {\bf 66$\pm$4} \\
    \toprule
    {\bf FashionMNIST} & $\sigma=\epsilon=0$ & $\sigma=0.1$ & $\sigma=0.2$ & $\sigma=0.3$ & $\epsilon=0.1$ & $\epsilon=0.2$ & $\epsilon=0.3$ & $\delta=0.01$ & $\delta=0.02$ & $\delta=0.03$ & $\delta=0.01$ & $\delta=0.02$ & $\delta=0.03$ \\
    \hline
    Vanilla NODE & 88$\pm$0.1 & 75$\pm$4 & 47$\pm$4 & 35$\pm$4 & 69$\pm$2 & 51$\pm$4 & 38$\pm$5 & 63$\pm$1 & 31$\pm$1 & 13$\pm$1 & 62$\pm$1 & 29$\pm$1 & 11$\pm$1 \\
    CNODE & 88$\pm$0.2 & {\bf 85$\pm$1} & {\bf 72$\pm$2} & {\bf 55$\pm$4} & {\bf 75$\pm$2} & {\bf 57$\pm$5} & {\bf 42$\pm$5} & {\bf 72$\pm$1} & {\bf 49$\pm$2} & {\bf 28$\pm$2} & {\bf 71$\pm$1} & {\bf 47$\pm$3} & {\bf 26$\pm$2} \\
    \bottomrule
\end{tabular}
\label{table.mergedAcc}

\end{table*}

From the tables, we can observe that the CNODEs achieve higher mean classification accuracy than the vanilla NODEs in the presence of image perturbations.
In some cases, the performance improvements are very significant (up to  34\% for the case of Gaussian noises).
Moreover, the standard deviations with CNODEs are either the same or less than those with vanilla NODEs in almost all the experiments, which means, CNODEs are less sensitive than vanilla NODEs to the initial distribution of the weight matrices.
Furthermore, we conduct a transferability study on the MNIST-dataset, where adversarial examples are generated using FGSM and PGD attacks with a vanilla NODE (i.e., without regularization) and then tested by CNODEs. The results are detailed in Appendix \ref{app:transfer_attacks}. Our observations indicate that CNODEs achieve higher accuracy when subjected to transfer attacks. This suggests that the robustness conferred by contractivity-promoting regularizers is not due to gradient obfuscation \cite{huang2022adversarial}. In other words, the gradients of CNODEs remain unobscured and non-vanishing, allowing an adversary to exploit this information to craft an adversarial attack.

\section{Conclusions}\label{sec.Conclusion}
In this paper, we use contraction from dynamical system theory to improve the robustness of NODEs.
We propose regularizers with different degrees of flexibility and different computational requirements to promote contractivity.
The good performance of the resulting NNs is illustrated on image classification tasks.
Future work will focus on the development of easy-to-compute regularizers for classes of NODEs stemming from specific choices of $f$ in~\eqref{eq.NODE}.

\bibliographystyle{ieeetr}
\bibliography{references}

\section*{APPENDIX}

\subsection{Experimental details}
\label{app:exp_details}

The NODE structure is described as follows, where unless otherwise specified, the same parameters are used for both the MNIST and the FashionMNIST datasets.
First, the image is processed by $h_\alpha(\cdot)$, which is a convolution operation with a filter size $3 \times 3$, a stride of $1$, and a channel number of 8 and 16 for MNIST and FashionMNIST dataset, respectively. 
Second, it is processed by the NODE~(\ref{eq.SimplifiedNODE}) for $T=0.1$, where the NN is also a convolution operation with a filter size $3 \times 3$, a zero-padding of  $1$, and a stride of $1$.
We use FE discretization with step size $h=0.01$ for training the NODEs.
Finally, the output of the NODE  is followed by a fully connected layer $g_\beta(\cdot)$ with output dimension $10$.
Due to the smoothness requirement of $f$ in~\eqref{eq.NODE} and the slope restrictions, we select the activation function in~(\ref{eq.SimplifiedNODE}) to be the smooth leaky ReLU function, given by
$ \sigma(x)=0.1x+0.9 \log(1+e^x), $
which satisfies $0.1\le \sigma'(\cdot)\le 1$.
We use the Adam optimizer to minimize the cross-entropy loss.
The initial learning rate for the Adam optimizer is $0.05$, and the learning rate is reduced by a factor of $0.7$ after every training epoch.
The maximal number of training epochs is $20$.
For the regularizer~\eqref{eq.ConvRegularizer}, we use $\rho=2$.
The weight $\gamma$ for the regularization term~\eqref{eq.ConvRegularizer} is set to $1$.

  \subsection{Contraction Rate VS Classification Accuracy}
  \label{sec.App2}
  In this appendix, we analyze how the contraction rate affects the classification accuracy.
For this purpose, we use the MNIST dataset and images perturbed by Gaussian noises or FGSM attacks.
We use contraction rates $\rho$ in the set $\{ 0.1, 2, 5, 7, 10, 12, 15\}$, train the CNODEs, and obtain 10 models for each $\rho$ by using different seeds.
Then, we calculate the classification accuracy of these models on the clean test dataset, the test dataset perturbed by Gaussian noises, and the test dataset attacked by FGSM.
The mean and the standard deviations of the classification accuracy are plotted in Figure~\ref{fig.rhoNominal}, Figure~\ref{fig.rhoGaussian} and Figure \ref{fig.rhoFGSM}, where the solid line represents the mean and the shaded region spans one standard deviation on each side of the mean.
We can observe that the average classification accuracy does not vary significantly for different contraction rates.
This suggests that the choice of the contraction rate is not critical for the MNIST experiments discussed in Section~\ref{sec.Experiments}.
\begin{figure}
    \centering
    \includegraphics[width=0.9\linewidth]{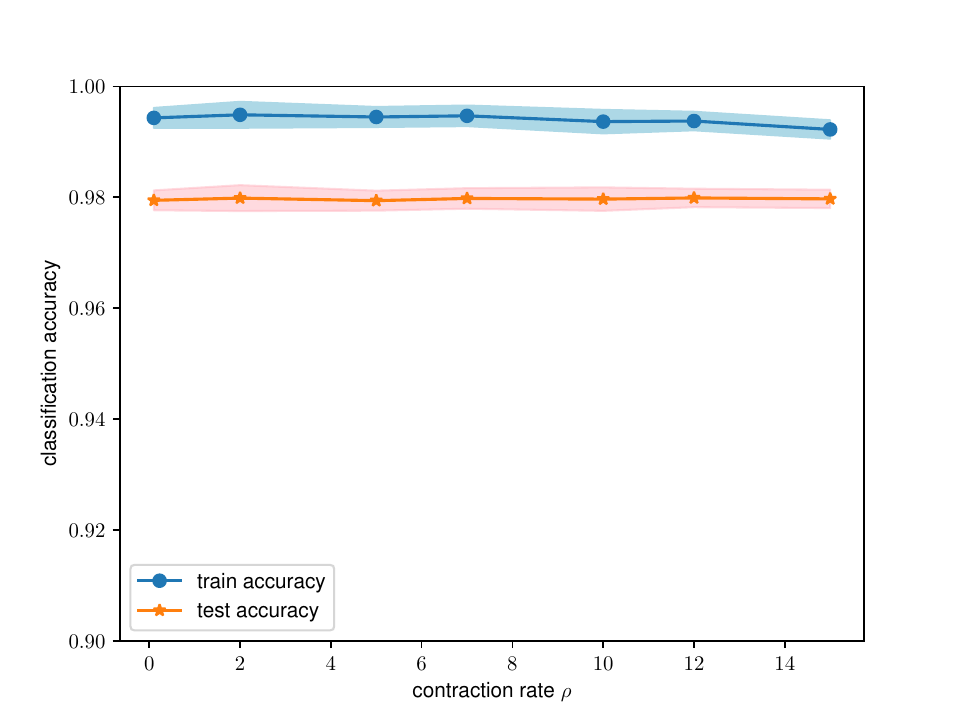}
    \caption{Classification accuracy on the clean test dataset with respect to different contraction rates $\rho$.}
    \label{fig.rhoNominal}
\end{figure}

\begin{figure}
  \centering
    \includegraphics[width=0.9\linewidth]{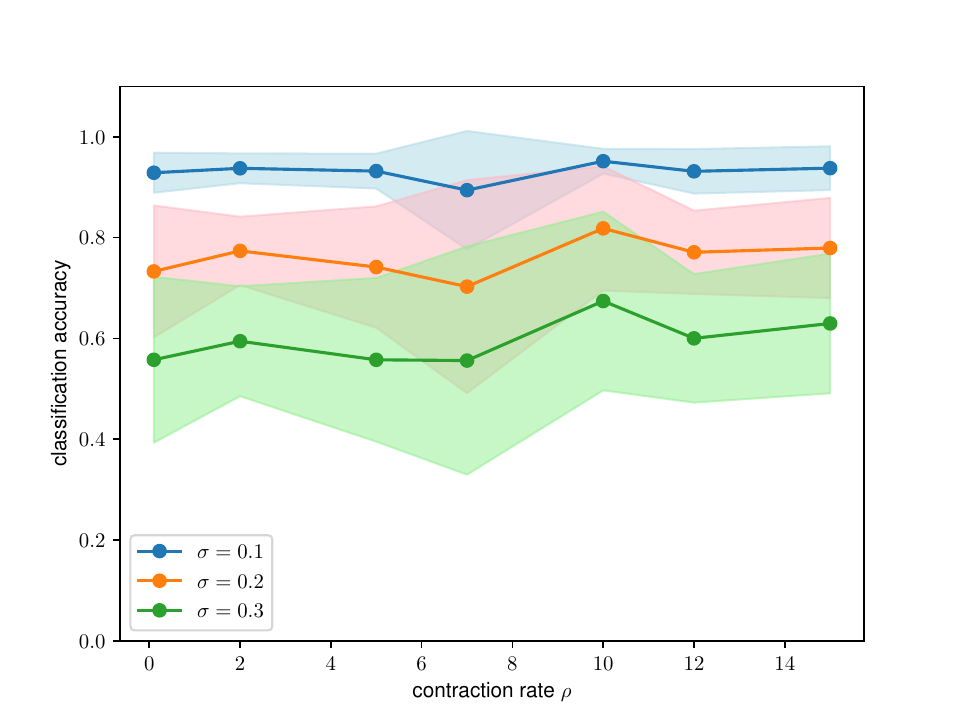}
    \caption{Classification accuracy on test dataset perturbed by Gaussian noise with respect to different contraction rates $\rho$.}
    \label{fig.rhoGaussian}
\end{figure}

\begin{figure}
  \centering
    \includegraphics[width=0.9\linewidth]{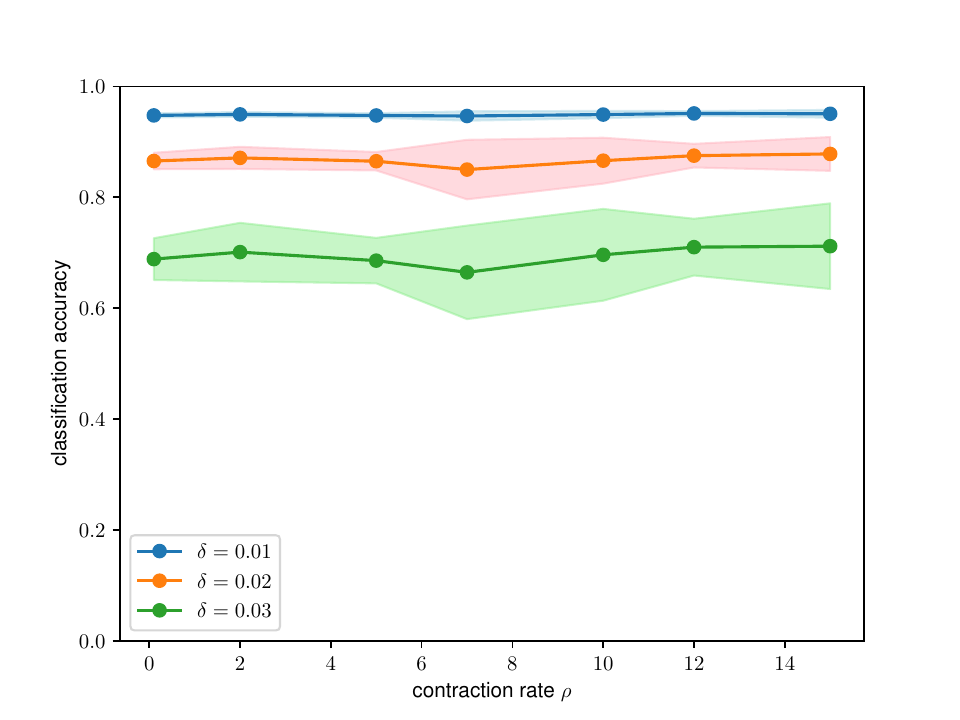}
    \caption{Classification accuracy on test dataset perturbed by FGSM attacks with respect to different contraction rates $\rho$.}
    \label{fig.rhoFGSM}
  \end{figure}

  \subsection{Regularizer Weight VS Classification Accuracy}
  \label{sec.App3}
  In this appendix, we analyze how the regularizer weight affects the classification accuracy.
  For this purpose, we use the MNIST dataset and images perturbed by Gaussian noises or FGSM attacks.
We use regularization parameter $\gamma$ in the set $\{ 0.1, 1, 5, 10, 20, 30, 40, 50 \}$, train the CNODEs, and obtain 10 models for each $\gamma$ by using different seeds.
Then, we calculate the classification accuracy of these models on the clean test dataset, the test dataset perturbed by Gaussian noises, and the test dataset attacked by FGSM.
The mean and the standard deviations of the classification accuracy are plotted in Figure~\ref{fig.weightNominal}, Figure~\ref{fig.weightGaussian}, and Figure~\ref{fig.weightFGSM}, where the solid line represents the mean and the shaded region spans one standard deviation on each side of the mean.
We can observe that the average classification accuracy does not vary significantly with different values of $\gamma$.
Therefore, the average robustness performance does not rely heavily on the selection of the regularization parameter $\gamma$ in the MNIST experiment in Section~\ref{sec.Experiments}.

\begin{table*}[t]
\centering
\scriptsize
\setlength{\tabcolsep}{3pt}
\renewcommand{\arraystretch}{1.1}
\caption{Classification accuracy over noisy and adversarial test images (mean $\pm$ standard deviation).}
\begin{tabular}{c|c|ccc|ccc|ccc|ccc}
    \toprule
    & No Noise & \multicolumn{3}{c|}{Gaussian} & \multicolumn{3}{c|}{Salt\&Pepper} & \multicolumn{3}{c|}{FGSM} & \multicolumn{3}{c}{PGD} \\
    \hline
    {\bf MNIST} & $\sigma=\epsilon=0$ & $\sigma=0.1$ & $\sigma=0.2$ & $\sigma=0.3$ & $\epsilon=0.1$ & $\epsilon=0.2$ & $\epsilon=0.3$ & $\delta=0.01$ & $\delta=0.02$ & $\delta=0.03$ & $\delta=0.01$ & $\delta=0.02$ & $\delta=0.03$ \\
    \hline
    Vanilla NODE & 98$\pm$0.3 & 65$\pm$23 & 45$\pm$21 & 37$\pm$16 & 76$\pm$9 & 54$\pm$11 & 42$\pm$8 & 92$\pm$2 & 67$\pm$8 & 42$\pm$9 & 91$\pm$3 & 63$\pm$12 & 36$\pm$11 \\
    CNODE(5) & 98$\pm$0.2 & {\bf 95$\pm$2} & {\bf 81$\pm$7} & {\bf 65$\pm$8} & {\bf 87$\pm$3} & {\bf 66$\pm$4} & {\bf 45$\pm$4} & {\bf 95$\pm$0.5} & {\bf 87$\pm$3} & {\bf 69$\pm$6} & {\bf 95$\pm$0.6} & {\bf 86$\pm$3} & {\bf 67$\pm$6} \\
    CNODE(7) & 98$\pm$0.1 & {\bf 95$\pm$2} & {\bf 81$\pm$7} & {\bf 65$\pm$11} & {\bf 88$\pm$3} & {\bf 67$\pm$6} & {\bf 46$\pm$6} & {\bf 95$\pm$0.5} & {\bf 87$\pm$2} & {\bf 70$\pm$6} & {\bf 95$\pm$0.5} & {\bf 87$\pm$3} & {\bf 69$\pm$6} \\
    \toprule
    {\bf FashionMNIST} & $\sigma=\epsilon=0$ & $\sigma=0.1$ & $\sigma=0.2$ & $\sigma=0.3$ & $\epsilon=0.1$ & $\epsilon=0.2$ & $\epsilon=0.3$ & $\delta=0.01$ & $\delta=0.02$ & $\delta=0.03$ & $\delta=0.01$ & $\delta=0.02$ & $\delta=0.03$ \\
    \hline
    Vanilla NODE & 88$\pm$0.1 & 75$\pm$4 & 47$\pm$4 & 35$\pm$4 & 69$\pm$2 & 51$\pm$4 & 38$\pm$5 & 63$\pm$1 & 31$\pm$1 & 13$\pm$1 & 62$\pm$1 & 29$\pm$1 & 11$\pm$1 \\
    CNODE(5) & 88$\pm$0.1 & {\bf 83$\pm$1} & {\bf 65$\pm$4} & {\bf 48$\pm$6} & {\bf 75$\pm$3} & {\bf 58$\pm$6} & {\bf 44$\pm$7} & {\bf 71$\pm$1} & {\bf 46$\pm$3} & {\bf 25$\pm$3} & {\bf 71$\pm$1} & {\bf 44$\pm$3} & {\bf 23$\pm$3} \\
    CNODE(7) & 88$\pm$0.1 & {\bf 83$\pm$1} & {\bf 61$\pm$5} & {\bf 42$\pm$5} & {\bf 75$\pm$2} & {\bf 55$\pm$5} & {\bf 40$\pm$5} & {\bf 73$\pm$1} & {\bf 48$\pm$2} & {\bf 27$\pm$2} & {\bf 72$\pm$1} & {\bf 47$\pm$2} & {\bf 25$\pm$2} \\
    \bottomrule
\end{tabular}
\label{table.mergedAcc2}

\end{table*}

\begin{figure}
    \centering
    \includegraphics[width=0.9\linewidth]{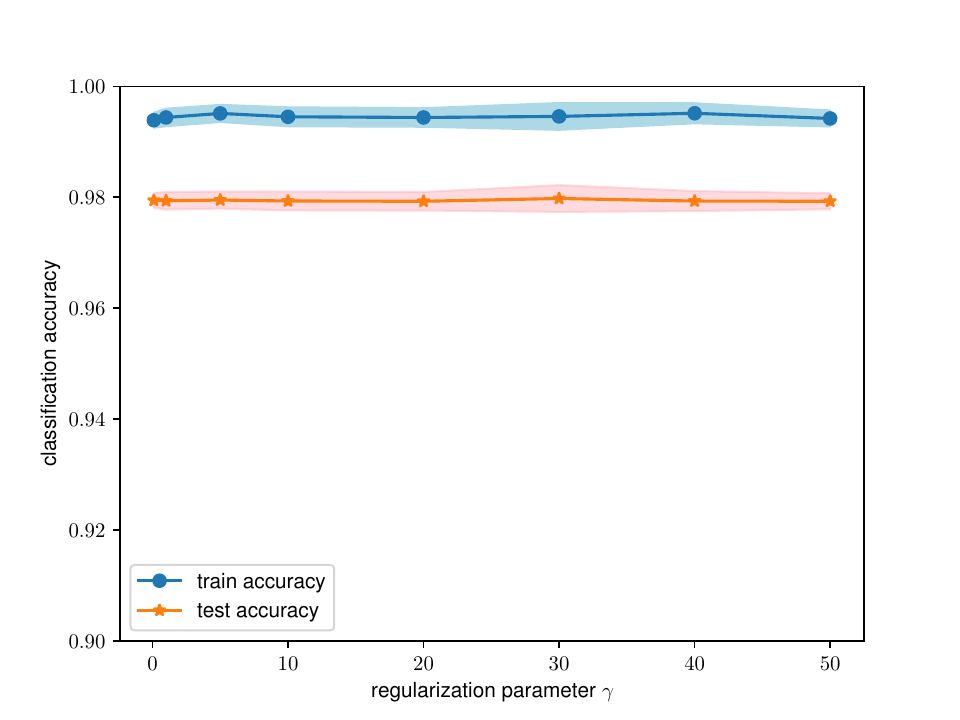}
    \caption{Classification accuracy on the clean test dataset with respect to different regularization parameters $\gamma$.}
    \label{fig.weightNominal}
\end{figure}

\begin{figure}
    \centering
    \includegraphics[width=0.9\linewidth]{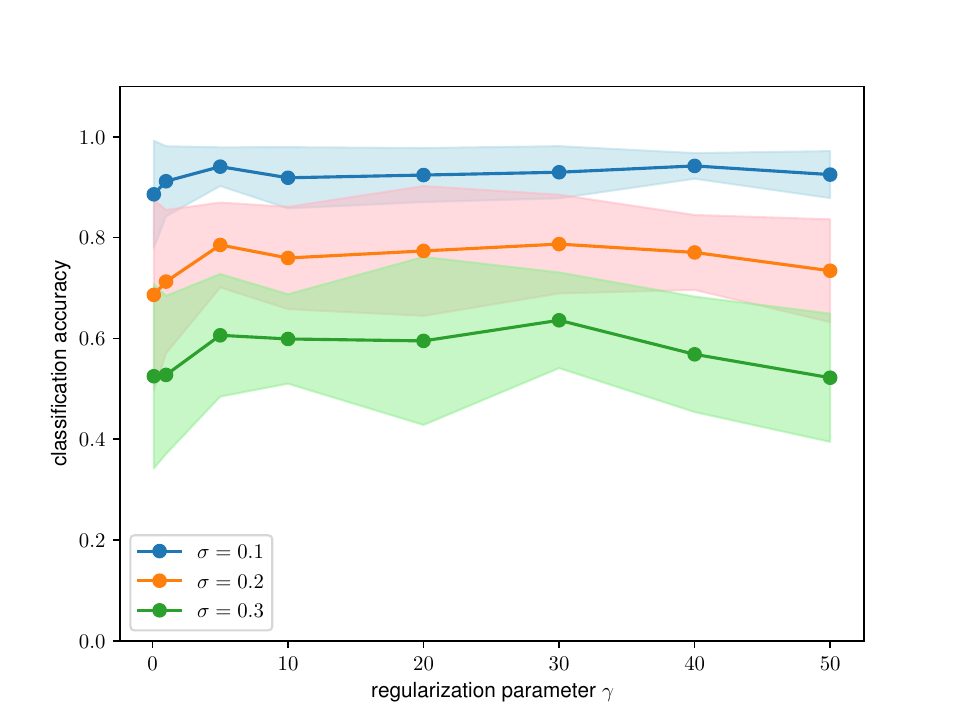}
    \caption{Classification accuracy on test dataset perturbed by  Gaussian noise with respect to different regularization parameters $\gamma$.}
    \label{fig.weightGaussian}
\end{figure}

\begin{figure}
    \centering
    \includegraphics[width=0.9\linewidth]{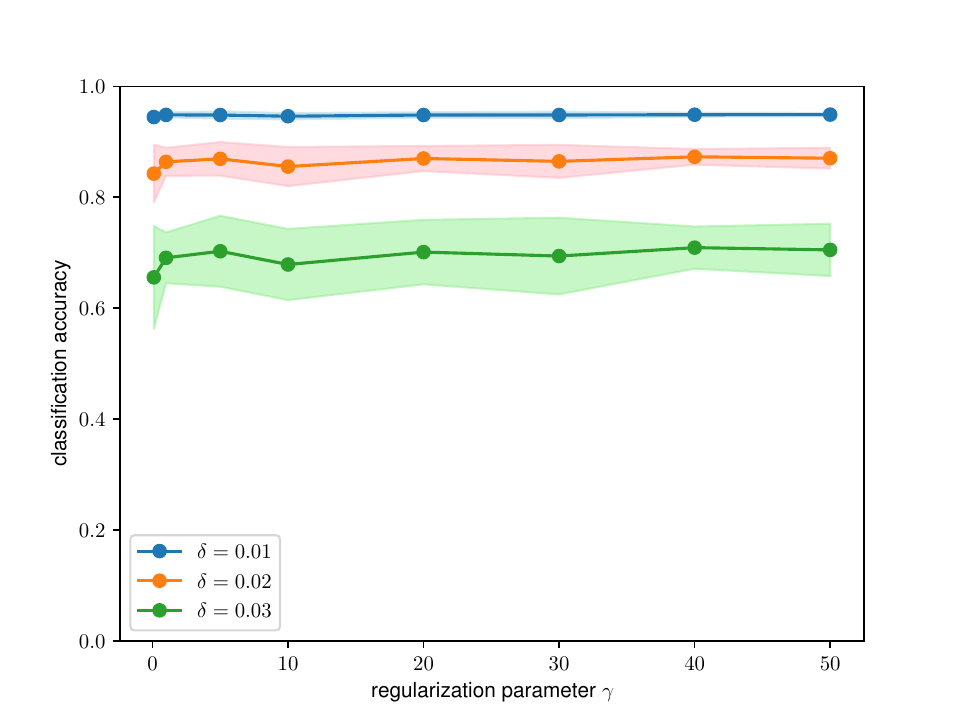}
    \caption{Classification accuracy on  test dataset perturbed FGSM attacks with respect to different regularization parameters $\gamma$.}
    \label{fig.weightFGSM}
\end{figure}

\subsection{Convolution Parameters VS Classification Accuracy}\label{sec.App4}

In this Appendix, we perform an ablation study by selecting different convolution parameters and demonstrate that CNODEs can still achieve improved robustness performance.
We set parameters of the convolution operation in the NODE to the following two groups:
\begin{itemize}
\item Group 1: a filter size $5 \times 5$, a zero-padding of  $2$, and a stride of $1$.
\item Group 2: a filter size $7 \times 7$, a zero-padding of  $3$, and a stride of $1$.
\end{itemize}
Then we re-conduct the experiment. The maximal number of training epochs is $30$.
The other training parameters are set to be the same with those in Section~\ref{sec.Experiments}.
The average test accuracy and the standard deviation data are shown in the Table \ref{table.mergedAcc2}, where CNODE(5) and CNODE(7) represent the CNODE with convolution parameter Group 1 and Group 2, respectively.
We can observe that with different convolution parameters, the CNODE can still achieve improved performance.

\subsection{Transferability Studies on MNIST-dataset}
\label{app:transfer_attacks}
We employ the same experimental setup as detailed in Section \ref{sec.Experiments}. To ascertain whether the robustness of contractive NODEs is attributable to gradient masking, we generate adversarial examples using vanilla NODEs and subsequently input these examples into regularized NODEs that are trained with contractivity-promoting regularizers. It is important to note that both NODE architectures are identical.

Table \ref{table.transfer_attacks} demonstrates that conducting a white-box attack directly on the regularized NODEs (second row) yields greater effectiveness compared to performing transferred attacks using the vanilla model (third row). This implies that promoting contractivity in NODEs through the proposed method does not mask the gradient when performing FGSM and PGD attacks.

\begin{table}
\centering
\scriptsize
\setlength{\tabcolsep}{2pt}
\renewcommand{\arraystretch}{1.0}

\caption{Mean Classification accuracy over adversarial test images.}
\begin{tabular}{c|ccc|ccc}
    \toprule
      Attack  & \multicolumn{3}{c|}{FGSM} & \multicolumn{3}{c}{PGD} \\
    \hline
      $-$ & $\delta=0.01$ & $\delta=0.02$ & $\delta=0.03$ & $\delta=0.01$ & $\delta=0.02$ & $\delta=0.03$ \\
    \hline
    Adversarial & 94.8 & 84.9 & 68.0 & 94.7 &  84.0 & 64.9 \\
     Transfer &  96.9 & 94.4 & 88.3 &  97.0  &  94.9 &  89.3 \\
     \toprule
\end{tabular}

\label{table.transfer_attacks}

\end{table}

\end{document}